\documentclass[journal]{IEEEtran}

\ifCLASSINFOpdf
\else
   \usepackage[dvips]{graphicx}
\fi
\usepackage{url}

\usepackage{graphicx}
\usepackage{amsfonts}
\usepackage{amsmath, amsthm, amssymb}
\usepackage{xcolor}
\usepackage{booktabs}
\usepackage{multirow}
\usepackage{balance}
\newtheorem{lemma}{Lemma}

\begin{document}

\title{Orthogonal Subspace Projection for Continual Machine Unlearning via SVD-Based LoRA}

\author{Yogachandran Rahulamathavan, Nasir Iqbal, Juncheng Hu, and Sangarapillai Lambotharan, \IEEEmembership{Senior Member, IEEE}
\thanks{Y. Rahulamathavan, N. Iqbal and S. Lambotharan are with Institute for Digital Technologies, Loughborough University, UK. E-mails: \{y.rahulamathavan, n.iqbal, s.lambotharan\}@lboro.ac.uk}
\thanks{J. Hu is with the School of Artificial Intelligent Medicine, Guilin Medical University, Guilin 541199, CN. E-mail: 112018015@glmu.edu.cn}
\thanks{\ The authors wish to acknowledge the financial support of the Engineering and Physical Sciences Research Council (EPSRC) under the grants Platform for Driving Ultimate Connectivity (TITAN) (EP/X04047X/1 and EP/Y037243/1) and PerCom (EP/X012301/1).}
}

\markboth{Journal of \LaTeX\ Class Files, Vol. 14, No. 8, August 2015}
{Shell \MakeLowercase{\textit{et al.}}: Bare Demo of IEEEtran.cls for IEEE Journals}
\maketitle

\begin{center}
\footnotesize
This work has been submitted to the IEEE for possible publication. Copyright may be transferred without notice, after which this version may no longer be accessible.
\end{center}

\begin{abstract}
Continual machine unlearning aims to remove the influence of data that should no longer be retained, while preserving the usefulness of the model on everything else. This setting becomes especially difficult when deletion requests arrive sequentially, because the model must repeatedly adapt without erasing previously retained knowledge. Low-Rank Adaptation (LoRA) offers an efficient way to implement such updates, but naively combining many sequential LoRA modules leads to parameter collision, causing \textit{strong interference} between tasks.
We propose a static alternative based on Singular Value Decomposition (SVD)-guided orthogonal subspace projection. Our method constrains each new LoRA update during training so that it lies in the orthogonal complement of the subspaces used by earlier unlearning tasks. This preserves task isolation without requiring dynamic routing at deployment. Experiments on CIFAR-100 with ResNet-20 and on MNIST show stable behavior across long sequences of unlearning tasks. After thirty sequential unlearning tasks, state-of-the-art static fusion reduces retained accuracy from 60.39\% to 12.70\%, whereas the proposed in-training constrained optimization maintains baseline performance ($\sim$58.1\%) while preserving strong unlearning efficacy.
\end{abstract}
\begin{IEEEkeywords}
Machine Unlearning, Low Rank Adapters (LoRA), Privacy.
\end{IEEEkeywords}

\section{Introduction}

\IEEEPARstart{T}{he} ``right to be forgotten," formalized by regulations such as the GDPR, has made machine unlearning an increasingly important capability for deployed learning systems \cite{bourtoule2021machine}. In many settings, deletion requests do not arrive once, but repeatedly over time. A useful continual unlearning method must therefore do more than forget the current target data: it must also preserve the model's behavior on retained data and remain practical to deploy after many rounds of updates.

LoRA\cite{hu2021lora} is attractive in this context because it allows each unlearning step to be represented by a small set of trainable parameters rather than a full retraining of the model. However, this efficiency comes with a challenge. When many LoRA updates are learned sequentially and then fused into a single model, their effects can overlap in the same parameter directions. As the number of tasks grows, this overlap leads to parameter collision, which in turn degrades retained performance and can produce catastrophic forgetting. This challenge is closely related to the broader interference problem studied in continual learning \cite{kirkpatrick2017overcoming,lopez2017gradient,zeng2019continual}, while classical machine unlearning methods based on selective retraining remain costly under repeated deletion requests \cite{golatkar2020eternal,koh2017understanding}.

One response to this problem is to avoid static fusion entirely. Recent methods such as AC-LoRA \cite{aclora2025} and I-LoRA \cite{ilora2024} instead rely on dynamic Mixture-of-Experts (MoE) routing \cite{dou2023loramoe}, where a routing network decides which adapter should be emphasized for each input. This helps isolate task-specific behavior and reduces interference. Yet the price of this isolation is paid at deployment time: all adapters must remain available, additional routing parameters must be stored, and each inference requires a routing decision before combining the adapter outputs. These costs are manageable in some settings, but they weaken the appeal of continual unlearning when static and lightweight deployment is desired.

In this paper, we pursue a different goal: can sequential unlearning updates remain separable \emph{without} introducing a dynamic routing mechanism at inference time? Our answer is to enforce separation during training itself. We formulate continual unlearning as an \textit{in-training constrained optimization problem} in which each new LoRA update is restricted to the orthogonal complement of the subspaces already occupied by earlier tasks. To construct this constraint, we use SVD to summarize the dominant directions of previous updates and accumulate them into a projection matrix.

This perspective yields two practical advantages. First, it directly targets the source of interference by controlling the geometry of the learned updates rather than resolving collisions after they occur. Second, because the resulting updates occupy orthogonal subspaces, they can be fused back into the base model for static deployment, removing the need for a router and dynamic adapter selection. We also show that embedding the projection into the forward pass behaves as a form of projected gradient descent, which avoids the sub-optimality often introduced by post-hoc projection after unconstrained training.

Experiments on MNIST and CIFAR-100 using ResNet-20 demonstrate that the proposed approach remains stable over long sequences of unlearning tasks. In particular, under $N=30$ sequential requests, conventional static fusion collapses retained accuracy to 12.7\% (from the baseline accuracy of 60.39\%), whereas our in-training constrained optimization preserves the baseline level ($\sim$58.1\%) while maintaining strong unlearning efficacy.

\begin{figure*}[t]
  \centering
    \boxed{\includegraphics[width=0.8\linewidth, trim=0cm 2cm 0cm 2cm, clip]{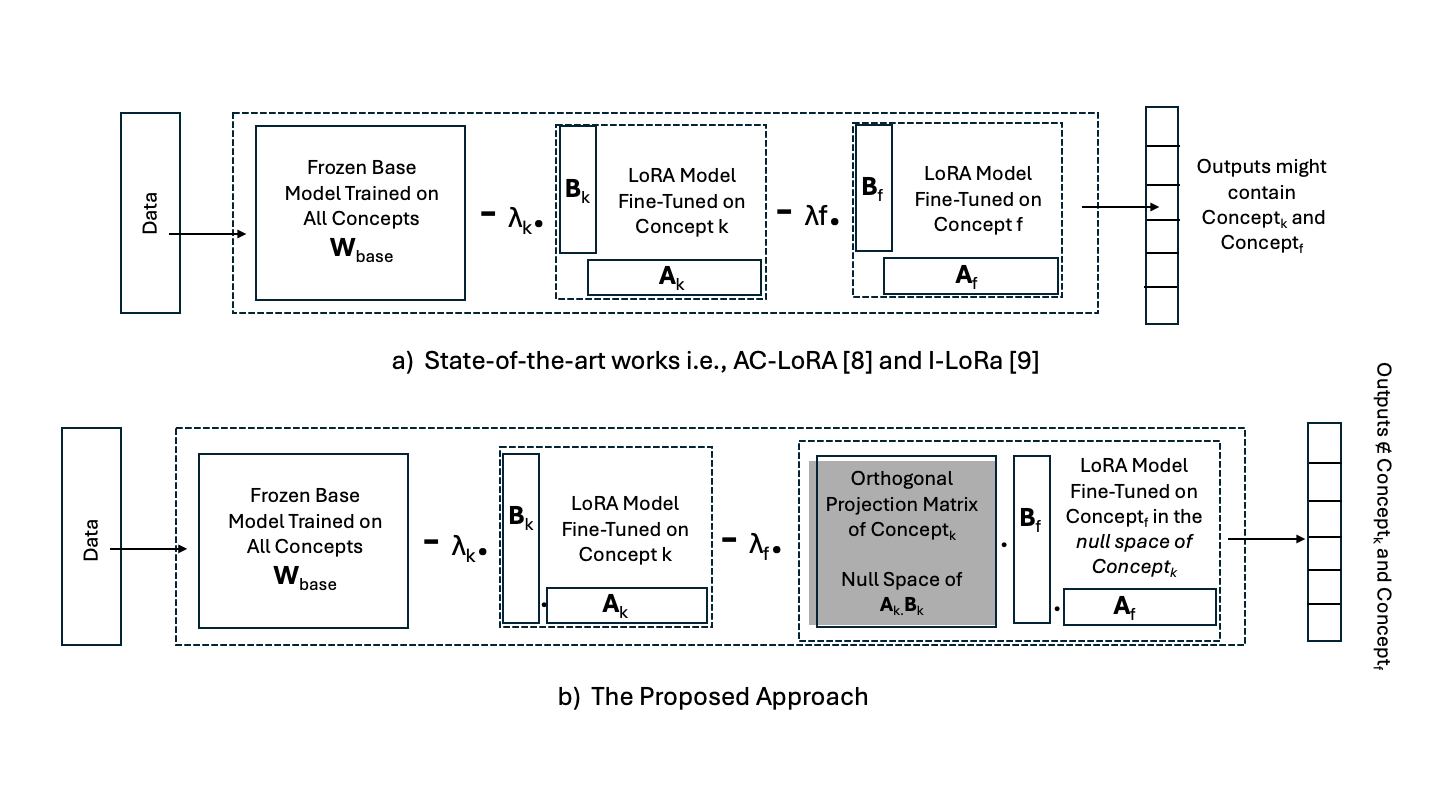}}
  \caption{High-level architecture comparision of the related works (top) and the proposed approach (bottom).}
    \label{fig:architecture}
\end{figure*}

\section{Background and Related Work}
As we observed in our recent work \cite{nasir2026}, the classical machine unlearning methods based on retraining \cite{bourtoule2021machine}, Fisher Information Matrices \cite{golatkar2020eternal}, or influence functions \cite{koh2017understanding} are expensive under sequential deletion requests. LoRA \cite{hu2021lora} improves efficiency by learning low-rank updates ($\Delta W_k = A_k B_k$, where $A_k \in \mathbb{R}^{d_{in} \times r}$ and $B_k \in \mathbb{R}^{r \times d_{out}}$ are the trainable low-rank factor matrices for task $k$) for $N$ target concepts. Ideally, these adapters are statically fused:
\begin{equation}
W_{fused} = W_{base} - \lambda \sum_{k=1}^N \Delta W_k,
\label{eq:fuse}
\end{equation}
However, unconstrained optimization causes the weight distributions of sequential adapters to heavily overlap. Statically fusing them in (\ref{eq:fuse}) results in severe parameter collision, degrading the model's core intelligence on retained data.

To avoid parameter collision, recent methods use dynamic MoE routing \cite{dou2023loramoe}. Frameworks such as AC-LoRA \cite{aclora2025} and I-LoRA \cite{ilora2024} isolate task-specific updates and use a routing network, parameterized by $\theta_G$, to evaluate the input $x$:
$$ \alpha(x) = \text{Softmax}(G(x; \theta_G)) $$
where $\alpha_k(x) \in [0, 1]$ represents the activation weight for the $k$-th adapter. The modified forward pass becomes a dynamically weighted sum:
$$ h = x W_{base} + \sum_{k=1}^N \alpha_k(x) \cdot x (A_k B_k) $$
By pushing $\alpha_k(x) \to 0$ for non-target data, these methods protect the base model from interference. However, they require $\mathcal{O}(N)$ memory to keep all adapters loaded and add routing latency at inference time.

\textbf{Our contribution.} Our framework removes this deployment bottleneck by mapping sequential unlearning requests to orthogonal subspaces during training, ensuring that updates ($\Delta W_k$) do not collide. Because the updates are orthogonal, they can be fused into the base weights using (\ref{eq:fuse}), eliminating the router and dynamic summation at inference time.

\section{Proposed Method}
Consider a pre-trained deep neural network such as a ResNet. While LoRA can be applied across multiple layers, we formulate the framework on a single target layer (e.g., the final classification head) for clarity.

As shown in Fig. \ref{fig:architecture}, let $W_{base} \in \mathbb{R}^{d_{in} \times d_{out}}$ denote the frozen weights of the target layer. We receive a sequence of $N$ unlearning tasks, where task $k$ aims to forget data distribution $D_k$. For each task, we introduce a LoRA parameterized by $A_k \in \mathbb{R}^{d_{in} \times r}$ and $B_k \in \mathbb{R}^{r \times d_{out}}$, where $r \ll \min(d_{in}, d_{out})$, and obtain $\Delta W_k = A_k B_k$. The goal is to fuse these updates as in (\ref{eq:fuse}) without degrading accuracy on the retained distribution $D_{retain}$.

\subsection{Subspace Projection}
To prevent parameter collision between sequential updates, we must ensure that the update for task $k$ does not interfere with the principal geometric directions utilized by tasks $1$ through $k-1$. In order to achieve this, for a given update $\Delta W_k$, we perform SVD and obtain the left and right singular matrices $U_k$  and $V_k$ as follows:
\begin{equation}
\Delta W_k = U_k S_k V_k^T,
\label{eq:svd of W_k}
\end{equation}
We then extract the top $r$ left singular vectors, $U_{k, r} \in \mathbb{R}^{d_{in} \times r}$, which represent the principal input directions spanning the unlearning trigger for task $k$. We maintain a cumulative orthogonal projection matrix $P \in \mathbb{R}^{d_{in} \times d_{in}}$, initialized as the identity matrix $I$. After each task $k$ is optimized, the projection matrix is recursively updated by removing the newly occupied subspace:
\begin{equation}
P_k = P_{k-1} - (U_{k, r} U_{k, r}^T),
\label{eq:projection_update}
\end{equation}
where $P_0 = I$. 

\subsection{Subspace Projection Algorithms}
Once the cumulative projection matrix $P_{k-1}$ is established, it defines the safe mathematical null space for the $k$-th unlearning task. Let $\mathcal{L}_{CE}$ denote the standard Cross-Entropy loss. For a deletion request targeting the data distribution $D_k$, the objective is to train the adapter to perfectly capture the conceptual triggers of $D_k$, such that its isolated parametric contribution can be subsequently subtracted. We propose two distinct algorithmic approaches for applying the geometric constraint $P_{k-1}$ to the task-specific adapter:

\textbf{1) Post-Hoc Orthogonal Projection:} 
In this approach, the $k$-th LoRA module ($\Delta W_k = A_k B_k$) is initially optimized freely without any structural constraints. The adapter is trained to minimize the empirical risk on the forget distribution $D_k$:
$$ \min_{A_k, B_k} \mathbb{E}_{(x,y) \sim D_k} [\mathcal{L}_{CE}(f(x; W_{base} + A_k B_k), y)] $$
The optimizer freely navigates the loss landscape, converging to unconstrained local minima $\hat{A}_k$ and $\hat{B}_k$. After training concludes, we apply the geometric constraint post-hoc by projecting the learned input weight matrix onto the safe subspace:
$$ \tilde{A}_k = P_{k-1} \hat{A}_k $$
The final, safely fused update becomes $\Delta W_k = \tilde{A}_k \hat{B}_k$. While computationally straightforward, deep neural networks feature highly non-convex loss landscapes. Projecting a converged, unconstrained weight matrix onto a lower-dimensional subspace post-training often arbitrarily displaces the weights to a sub-optimal point, which can severely degrade the target unlearning efficacy.

\textbf{2) In-Training Constrained Optimization:} 
To resolve the sub-optimality of post-hoc corrections, we formulate the unlearning update as a constrained optimization problem. In this second approach, we embed the cumulative projection matrix directly into the forward pass of the $k$-th LoRA module from the onset of training. The effective input matrix is continuously defined as $\tilde{A}_k = P_{k-1} A_k$. 

The unlearning objective for the $k$-th concept is thus strictly bounded by the projection matrix during the forward pass:
$$ \min_{A_k, B_k} \mathbb{E}_{(x,y) \sim D_k} [\mathcal{L}_{CE}(f(x; W_{base} + P_{k-1} A_k B_k), y)] $$ This approach intrinsically acts as a learning in the null space of the previous tasks. We formally prove this geometric constraint via the following lemma:

\begin{lemma}\textbf{Gradient Feasibility}
Let $\mathcal{L}(\tilde{A}_k, B_k)$ be the training loss for task $k$, where $\tilde{A}_k = P_{k-1}A_k$. Then:
$$ \nabla_{A_k}\mathcal{L} = P_{k-1}\nabla_{\tilde{A}_k}\mathcal{L} $$
Therefore, every gradient step on $A_k$ strictly lies in $\mathrm{Range}(P_{k-1})$. In particular, $(I - P_{k-1})\nabla_{A_k}\mathcal{L} = 0$.
\end{lemma}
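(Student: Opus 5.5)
The plan is to prove the gradient identity by the chain rule, and then to obtain the range statement from the fact that $P_{k-1}$ is an orthogonal projector. I would first regard the loss as a composite map $A_k \mapsto \tilde{A}_k = P_{k-1}A_k \mapsto \mathcal{L}(\tilde{A}_k, B_k)$, with $B_k$ held fixed. For a perturbation $E$ of $A_k$, the first-order change in the loss is
$$\langle \nabla_{\tilde{A}_k}\mathcal{L},\, P_{k-1}E\rangle_F = \mathrm{tr}\bigl((\nabla_{\tilde{A}_k}\mathcal{L})^T P_{k-1} E\bigr) = \langle P_{k-1}^T\nabla_{\tilde{A}_k}\mathcal{L},\, E\rangle_F ,$$
so $\nabla_{A_k}\mathcal{L} = P_{k-1}^T \nabla_{\tilde{A}_k}\mathcal{L}$. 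The identity in the lemma therefore reduces to showing that $P_{k-1}$ is symmetric. This follows from (\ref{eq:projection_update}): $P_0 = I$ is symmetric and each subtracted term $U_{j,r}U_{j,r}^T$ is symmetric, so by induction every $P_j$ is symmetric.

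For the second claim, $\nabla_{A_k}\mathcal{L} = P_{k-1}(\cdot)$ lies in $\mathrm{Range}(P_{k-1})$ trivially. However, the statement $(I-P_{k-1})\nabla_{A_k}\mathcal{L}=0$ requires $(I-P_{k-1})P_{k-1}=0$, that is, $P_{k-1}^2 = P_{k-1}$. I would establish this by induction on $k$, with the hypothesis that $P_{j}$ is the orthogonal projector onto the orthogonal complement of $\mathrm{span}(U_{1,r},\dots,U_{j,r})$, and that these blocks together form an orthonormal set.

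The inductive step uses the in-training constraint. Task $j$ is trained with $\tilde{A}_j = P_{j-1}A_j$, so $\Delta W_j = P_{j-1}A_jB_j$ has column space inside $\mathrm{Range}(P_{j-1})$. Its left singular vectors with nonzero singular values therefore lie in $\mathrm{Range}(P_{j-1})$, which means they are orthogonal to every earlier $U_{i,r}$ with $i<j$. They are also orthonormal among themselves. Hence $U_{j,r}U_{j,r}^T$ is an orthogonal projector onto a subspace of $\mathrm{Range}(P_{j-1})$, and it satisfies $P_{j-1}U_{j,r}=U_{j,r}$. Expanding $(P_{j-1}-U_{j,r}U_{j,r}^T)^2$ then gives back $P_j$, since the cross terms equal $U_{j,r}U_{j,r}^T$.

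I expect the main obstacle to be this idempotence step, specifically the degenerate case where $\mathrm{rank}(\Delta W_j) < r$. In that case the "top $r$" left singular vectors include null-singular-value directions, which SVD may choose arbitrarily and possibly outside $\mathrm{Range}(P_{j-1})$. I would handle this by explicitly assuming either that $\Delta W_j$ has rank $r$, or that $U_{j,r}$ is completed within $\mathrm{Range}(P_{j-1})$. A further condition is needed: $\sum_j r \le d_{in}$. With these conventions in place, $P_{k-1}$ is an orthogonal projector, and the result follows as $(I-P_{k-1})\nabla_{A_k}\mathcal{L} = (P_{k-1}-P_{k-1}^2)\nabla_{\tilde{A}_k}\mathcal{L} = 0$.
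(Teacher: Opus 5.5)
Your argument for the identity itself is the paper's proof: the chain rule moves $P_{k-1}$ across the Frobenius inner product, and symmetry gives $P_{k-1}^T=P_{k-1}$.

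The difference is in what comes after. The paper simply asserts that $P_{k-1}$ is a symmetric orthogonal projector and never argues the final clause. You instead derive symmetry from the recursion (\ref{eq:projection_update}) for arbitrary $U_{j,r}$, so the identity and the range claim need no hypotheses at all. You then prove idempotence by induction, using the in-training constraint to place the left singular vectors of $\Delta W_j=P_{j-1}A_jB_j$ with nonzero singular values inside $\mathrm{Range}(P_{j-1})$. That is exactly what $(I-P_{k-1})\nabla_{A_k}\mathcal{L}=0$ requires.

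Your caveats are necessary, not pedantic. If a zero-singular-value direction $u$ with $P_{j-1}u=0$ is selected, then $P_ju=-u$, and the last clause can genuinely fail. Moreover, the paper's CIFAR-100 setting ($d_{in}=64$, $r=4$, $N=30$) cannot satisfy your hypotheses beyond $16$ tasks.

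Two small refinements:
\begin{itemize}
\item You should flag your reading of $\Delta W_j$ as the effective update $P_{j-1}A_jB_j$ as essential. The component $(I-P_{j-1})A_j$ receives zero gradient and keeps its initial value, so an SVD of the raw product $A_jB_j$ would in general leak directions outside $\mathrm{Range}(P_{j-1})$.
\item Under your rank-$r$ convention, the bound $\sum_j r\le d_{in}$ is automatic, since an $r$-dimensional column space must fit inside $\mathrm{Range}(P_{j-1})$. It is an independent hypothesis only under the completion convention.
\end{itemize}
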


\begin{proof}
Applying the differential chain rule to $\tilde{A}_k = P_{k-1}A_k$ yields:
$$ d\tilde{A}_k = P_{k-1}dA_k $$
The differential of the loss $\mathcal{L}$ with respect to the constrained parameters is defined by the Frobenius inner product:
$$ d\mathcal{L} = \langle \nabla_{\tilde{A}_k}\mathcal{L}, d\tilde{A}_k \rangle $$
Substituting $d\tilde{A}_k$ into the inner product:
$$ d\mathcal{L} = \langle \nabla_{\tilde{A}_k}\mathcal{L}, P_{k-1}dA_k \rangle = \langle P_{k-1}^T \nabla_{\tilde{A}_k}\mathcal{L}, dA_k \rangle $$
Because the orthogonal projection matrix is symmetric ($P_{k-1}^T = P_{k-1}$), this simplifies to:
$$ d\mathcal{L} = \langle P_{k-1} \nabla_{\tilde{A}_k}\mathcal{L}, dA_k \rangle $$
By the definition of the differential $d\mathcal{L} = \langle \nabla_{A_k}\mathcal{L}, dA_k \rangle$, it directly follows that $\nabla_{A_k}\mathcal{L} = P_{k-1} \nabla_{\tilde{A}_k}\mathcal{L}$.
\end{proof}

Because the optimizer is mathematically restricted from exploring or updating dimensions utilized by prior tasks, the adapter converges to a true local minimum strictly within the feasible orthogonal subspace. This maximizes unlearning efficacy while guaranteeing zero interference with historical tasks.

\begin{table*}[htbp]
\centering
\caption{Continual Unlearning Performance on ResNet-20 (CIFAR-100). Colors indicate: \textcolor{blue}{\textbf{Blue}} (Excellent Retain Preservation), \textcolor{red}{\textbf{Red}} (Catastrophic Retain Collapse), \textcolor{teal}{\textbf{Green}} (Successful Unlearning), and \textcolor{orange}{\textbf{Orange}} (Limited Unlearning Efficacy).}
\label{tab:resnet_results}
\resizebox{\textwidth}{!}{%
\begin{tabular}{llcccc}
\toprule
\multirow{2}{*}{\textbf{Tasks ($N$)}} & \multirow{2}{*}{\textbf{Method}} & \multicolumn{2}{c}{\textbf{$\lambda=0.5$}} & \multicolumn{2}{c}{\textbf{$\lambda=1.0$}} \\
\cmidrule(lr){3-4} \cmidrule(lr){5-6}
& & \textbf{Retain Acc. ($\uparrow$)} & \textbf{Forget Acc. ($\downarrow$)} & \textbf{Retain Acc. ($\uparrow$)} & \textbf{Forget Acc. ($\downarrow$)} \\
\midrule

\multirow{3}{*}{$N=5$} 
& Static / AC-LoRA \cite{aclora2025}, I-LoRA \cite{ilora2024} & 56.23\% & 5.60\% & 48.78\% & \textcolor{teal}{0.00\%} \\
& MoE Router \cite{dou2023loramoe} & \textcolor{blue}{59.80\%} & \textcolor{orange}{51.20\%} & \textcolor{blue}{59.80\%} & \textcolor{orange}{51.20\%} \\
& \textbf{In-Training Approach (Ours)} & \textcolor{blue}{59.04\%} & \textcolor{orange}{37.60\%} & \textcolor{blue}{58.02\%} & \textcolor{orange}{33.80\%} \\
\midrule

\multirow{3}{*}{$N=10$} 
& Static / AC-LoRA \cite{aclora2025}, I-LoRA \cite{ilora2024} & 51.89\% & 13.80\% & \textcolor{red}{37.58\%} & \textcolor{teal}{1.20\%} \\
& MoE Router \cite{dou2023loramoe} & \textcolor{blue}{59.62\%} & \textcolor{orange}{57.00\%} & \textcolor{blue}{59.62\%} & \textcolor{orange}{57.00\%} \\
& \textbf{In-Training Approach (Ours)} & \textcolor{blue}{58.68\%} & \textcolor{orange}{50.70\%} & \textcolor{blue}{57.19\%} & \textcolor{orange}{48.40\%} \\
\midrule

\multirow{3}{*}{$N=20$} 
& Static / AC-LoRA \cite{aclora2025}, I-LoRA \cite{ilora2024} & 45.30\% & 10.35\% & \textcolor{red}{25.19\%} & \textcolor{teal}{0.25\%} \\
& MoE Router \cite{dou2023loramoe} & \textcolor{blue}{60.46\%} & \textcolor{orange}{55.00\%} & \textcolor{blue}{60.46\%} & \textcolor{orange}{55.00\%} \\
& \textbf{In-Training Approach (Ours)} & \textcolor{blue}{59.39\%} & \textcolor{orange}{50.75\%} & \textcolor{blue}{58.23\%} & \textcolor{orange}{48.00\%} \\
\midrule

\multirow{3}{*}{$N=30$} 
& Static / AC-LoRA \cite{aclora2025}, I-LoRA \cite{ilora2024} & \textcolor{red}{35.21\%} & 10.60\% & \textcolor{red}{\textbf{12.74\%}} & \textcolor{teal}{\textbf{0.77\%}} \\
& MoE Router \cite{dou2023loramoe} & \textcolor{blue}{59.49\%} & \textcolor{orange}{59.10\%} & \textcolor{blue}{59.49\%} & \textcolor{orange}{59.10\%} \\
& \textbf{In-Training Approach (Ours)} & \textcolor{blue}{\textbf{58.76\%}} & \textcolor{orange}{56.13\%} & \textcolor{blue}{\textbf{58.06\%}} & \textcolor{orange}{53.83\%} \\
\bottomrule
\end{tabular}%
}
\end{table*}

\section{Experiments and Results}
To validate the proposed algorithms, we conduct empirical evaluations across two distinct domains: a fundamental 2-layer Multi-Layer Perceptron (MLP) on the MNIST dataset, and a highly non-convex deep architecture (ResNet-20) on the CIFAR-100 dataset. For the continual unlearning protocol, the models are sequentially tasked with forgetting $N$ distinct classes (up to $N=3$ for MNIST and $N=30$ for CIFAR-100).

\begin{figure*}[htbp]
    \centering
    \includegraphics[width=0.9\textwidth]{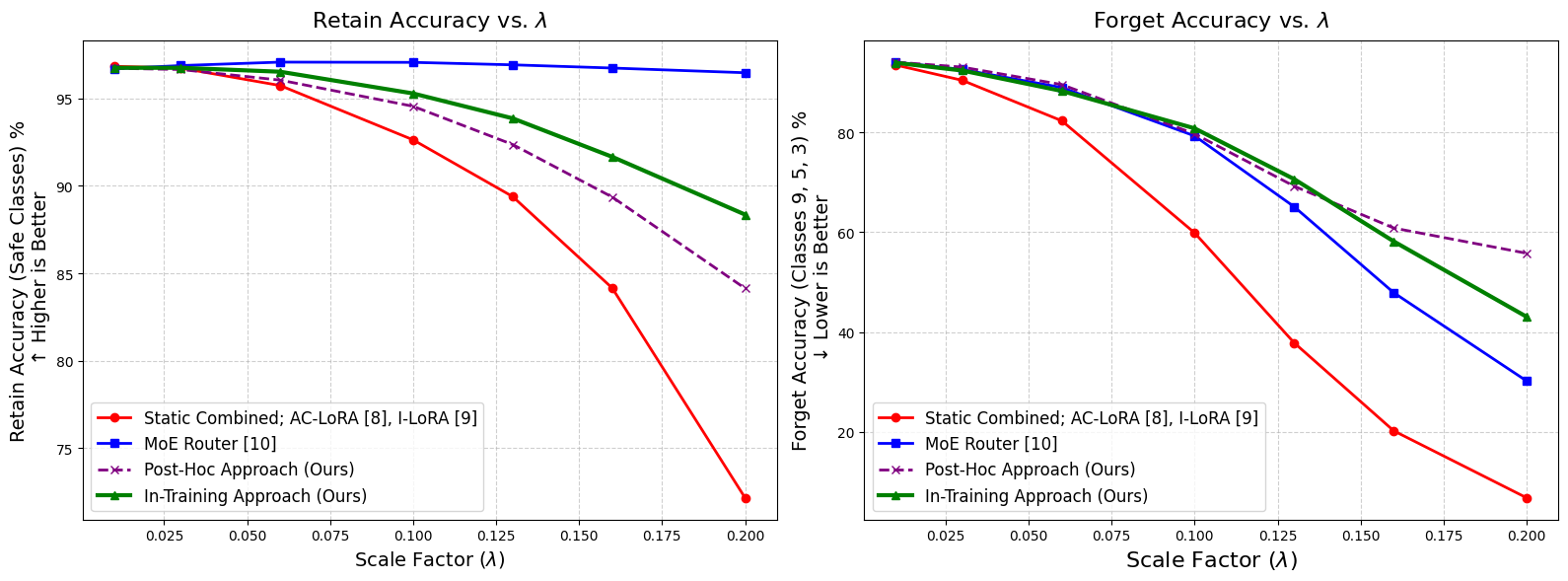}
    \caption{Operational trade-off trajectories for continual unlearning by sequentially forgetting 3 classes 9, 5 and 3 ($N=3$) on the MNIST dataset.}
    \label{fig:mnist_results}
\end{figure*}

\subsection{Results on MNIST Dataset}
We train a simple MLP on MNIST and sequentially unlearn classes 9, 5, and 3. As shown in Fig. \ref{fig:mnist_results}, increasing $\lambda$ exposes the weakness of unconstrained LoRA fusion: retained accuracy collapses as overlapping updates interfere. In contrast, the MoE router \cite{dou2023loramoe} and both proposed orthogonalization methods preserve a stable baseline while reducing forget accuracy to the 10--50\% range.


\subsection{Results on CIFAR-100 Dataset}
For the high-dimensional evaluation, we use a ResNet-20 pre-trained on CIFAR-100. We freeze the convolutional backbone and inject LoRA only into the final classification layer ($W_{base} \in \mathbb{R}^{64 \times 100}$). The model then undergoes $N=30$ sequential unlearning tasks, each targeting one class. Each task uses a low-rank update with $A_k \in \mathbb{R}^{64 \times 4}$ and $B_k \in \mathbb{R}^{100 \times 4}$. In the proposed in-training approach, the adapters are trained sequentially and the projection matrix $P_{k-1}$ is updated after each task via SVD.

Table \ref{tab:resnet_results} shows the core trade-off. Under extreme pressure ($\lambda = 1.0$), the Static Combined baselines \cite{ilora2024,aclora2025} achieves near-zero forget accuracy ($0.77\%$ at $N=30$) but collapses retain accuracy to $12.74\%$. The MoE Router \cite{dou2023loramoe} preserves retain accuracy near $59.5\%$ across capacities, but keeps target-class accuracy high ($\sim 59.10\%$).

\begin{figure}[htbp]
    \centering
    \includegraphics[width=0.9\linewidth]{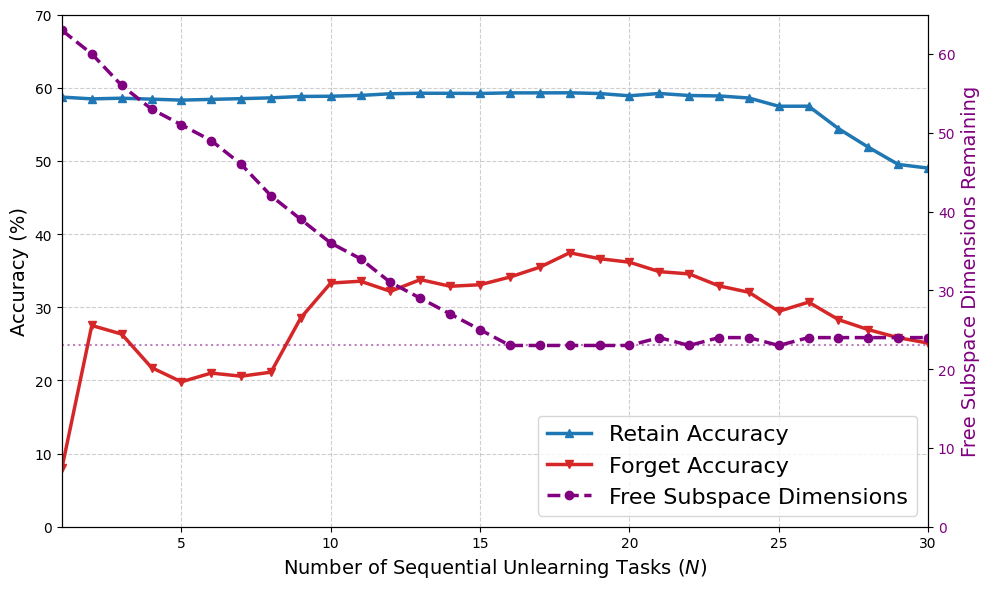}
    \caption{Correlation between available orthogonal subspace dimensions and continual unlearning performance on CIFAR-100 (ResNet-20) for $\lambda = 2.0$.}
    \label{fig:subspace_accuracy}
\end{figure}

\subsection{Performance of the Proposed In-Training Formulation}

Our in-training approach avoids the catastrophic forgetting seen in static baselines while bypassing routing overhead. At $N=20$ and $N=30$, it keeps retain accuracy stable (e.g., $58.06\%$ at $N=30, \lambda=1.0$). The trade-off is reduced unlearning depth under extreme capacity, with forget accuracy rising to $\sim 53.83\%$ at $N=30$. The standard static LoRA baseline suffers severe parameter collision: low forget accuracy is obtained only by collapsing retain accuracy to $12.74\%$. Dynamic routing prevents this collapse, but requires loading $30$ LoRA modules and evaluating a router on every forward pass. Our in-training approach captures the same retain stability as MoE routing ($\sim 58.0\%$) while keeping the updates geometrically isolated.

\subsection{Subspace Saturation and Performance Correlation}
Figure \ref{fig:subspace_accuracy} shows the relation between available subspace and performance. For roughly the first 15 tasks, the free orthogonal subspace depletes linearly and retain accuracy stays near $58.5\%$. Around $N=16$, the remaining space saturates near 23 dimensions, limiting further unlearning and causing forget accuracy to plateau. Retain accuracy stays protected until extreme overload ($N > 24$).
\section{Conclusion and Future Work}
In this letter, we introduced an in-training constrained formulation for continual unlearning that embeds SVD-based orthogonal projections directly into the adapter forward pass. Results on MNIST and CIFAR-100 show that the method prevents parameter collision across up to $N=30$ sequential tasks while preserving strong retained accuracy.

Future work will investigate non-linear projection techniques that can better capture complex task geometry, as well as min-max optimization formulations that more explicitly balance effective unlearning against retained model performance.

\balance

\end{document}